\newtheorem{theorem}{Theorem}
\newtheorem{assumption}{Assumption}
\title{Domain Generalization for Medical Imaging Classification with Linear-Dependency Regularization}
\author{Haoliang Li$^1$ \quad YuFei Wang$^1$ \quad Renjie Wan$^1$ \quad Shiqi Wang$^2$ \quad 
Tie-Qiang Li$^{3,4}$ \quad Alex C. Kot$^1$ \\
$^1$Rapid-Rich Object Search Lab, Nanyang Technological University, Singapore \\
$^2$Department of Computer Science, City University of Hong Kong, China \\
$^3$Department of Clinical Science, Intervention, and Technology, Karolinska Institute, Sweden\\
$^4$Department of Medical Radiation and Nuclear Medicine, Karolinska Uni­versity Hospital, Sweden\\
\{lihaoliang,yufei001,rjwan,eackot\}@ntu.edu.sg \quad shiqwang@cityu.edu.hk \quad tie-qiang.li@ki.se
}
\begin{document}
	
	\maketitle
	
	\begin{abstract}
		Recently, we have witnessed great progress in the field of medical imaging classification by adopting deep neural networks. However, the recent advanced models still require accessing sufficiently large and representative datasets for training, which is often unfeasible in clinically realistic environments. When trained on limited datasets, the deep neural network is lack of generalization capability, as the trained deep neural network on data within a certain distribution (e.g. the data captured by a certain device vendor or patient population) may not be able to generalize to the data with another distribution. 
		In this paper, we introduce a simple but effective approach to improve the generalization capability of deep neural networks in the field of medical imaging classification. Motivated by the observation that the domain variability of the medical images is to some extent compact, we propose to learn a representative feature space through variational encoding with a novel linear-dependency regularization term to capture the shareable information among medical data collected from different domains. As a result, the trained neural network is expected to equip with better generalization capability to the ``unseen" medical data.   Experimental results on two challenging medical imaging classification tasks indicate that our method can achieve better cross-domain generalization capability compared with state-of-the-art baselines.
	\end{abstract}
	

\section{Introduction}
Due to the breakthrough in machine learning and deep learning, recent years have witnessed numerous significant successes in various medical imaging tasks.  However, one of the limitations of deep learning is that it lacks generalization capability when the number of training data is not sufficient \cite{zhang2018translating}. In practice, it is often the case that the testing data (a.k.a. target domain) can be dissimilar to the training data (a.k.a. source domain) in terms of many factors, such as imaging protocol, device vendors and patient populations. Such domain shift problem can lead to a significantly negative impact on the performance of medical imaging classification. To tackle such domain shift problem, domain adaptation \cite{pan2010survey} aims to transfer the knowledge from a source domain to a different but relevant target domain.  Recently, many studies have been conducted to improve the transferable capability in the field of medical imaging classification with domain adaptation by assuming that target domain data are accessible \cite{zhang2018task,dou2018pnp}.

In many cases, requiring to access the target domain data in advance may not be feasible. For example, in the real-time clinical application scenario, it is difficult to collect sufficient target domain data to help with network training. For another example, it is also difficult to access the target domain data as many medical data are protected by privacy regulation.  Thus, it is natural to ask whether we can still learn a generalized deep neural network without any prior knowledge regarding the target domain. Domain generalization has been proposed to tackle this problem by assuming to have no access to the target information but utilizing multiple source domains' information to better generalize to the ``unseen" new domain for testing.

Generally speaking, current research regarding domain generalization in the field of medical imaging classification can be categorized into two streams. The first stream aims at conducting data augmentation based on medical imaging data in terms of image quality, image appearance and spatial shape \cite{zhang2019unseen}. Although the variation of medical images turns out to be more compact as the capturing environment can be fixed in advanced compared with the images captured in our daily life,  it may be difficult to choose suitable augmentation types and magnitudes for clinical deployment purposes in a certain environment.  The other stream leverages the advantage of domain alignment or meta-learning methods for feature representation learning \cite{yoon2019generalizable,dou2019domain}. However, the learned feature representation may still suffer from the overfitting problem, as the feature representations are only shareable among multiple source domains which may not be able to generalize to target. 
 
  In this work, we propose to marriage the advantage of data augmentation and domain alignment to tackle the domain generalization problem for medical imaging classification.  Instead of directly conducting augmentation in the image domain through some linear transformations with pre-defined parameters \cite{zhang2019unseen}, we assume that there exists linear dependency in a latent space among various domains. To model such linear dependency,  we propose to train a deep neural network with a novel rank regularization term on latent feature space by setting the rank of latent feature to be the number of categories. Meanwhile, we also propose to restrict the distribution of latent features to follow a pre-defined prior distribution through variational encoding. We theoretically prove that an upper bound on the empirical risk of any ``unseen" but related target domain can be achieved under our formulation, such that the overfitting problem can be alleviated.  Experimental results on two challenging medical imaging classification tasks, including imbalanced-category based skin lesion classification as well as spinal cord gray matter segmentation (which can be treated as pixel-wise classification), indicate that our proposed method can achieve much better generalization capability compared with other state-of-the-art baselines.  The code is available at \url{https://github.com/wyf0912/LDDG}.



\section{Related Works}

\textbf{Domain Adaptation and Generalization. }
To tackle the domain-shift problem between source and target domain data, traditional domain adaptation approaches focused on either subspace learning or instance re-weighting \cite{huang2006correcting,pan2011domain,zhang2015multi,ghifary2017scatter}.
Deep learning methods are also proved to be effective for domain adaptation task through either distribution alignment (e.g. Maximum Mean Discrepancy) \cite{long2015learning} or adversarial learning through feature level \cite{ganin2016domain,tzeng2017adversarial} or pixel level \cite{bousmalis2017unsupervised}. Recently, it has been shown that by considering pixel level adaptation and feature level adaptation together, better adaptation performance can be achieved \cite{hoffman2017cycada,li2020unsupervised}. 

Compared with domain adaptation, domain generalization is much more challenging, as we assume that we have no access to the target domain. Instead, we aim to train a model that is expected to be generalized to the ``unseen" target by assuming that only multiple source domains are available. For example, Yang and Gao~\cite{yang2013multi} proposed to leverage Canonical Correlation Analysis (CCA) to extract shareable information among domains. Muandet~\textit{et al.}~\cite{muandet2013domain} proposed a Domain Invariant Component Analysis (DICA) algorithm to learn an empirical mapping based on multiple source-domain data where the distribution mismatch across domains was minimized. This idea was further extended by \cite{li2018domain,li2020gmfad} in an autoencoder framework with distribution regularization on latent space.  In \cite{xu2014exploiting,li2017deeper}, the low-rank regularization based on classifier and model parameters were explored to extract universal feature representation.  Ghifary~\textit{et al.}~\cite{ghifary2015domain} proposed a multi-task autoencoder to learn domain invariant features by reconstructing the latent representation of a given sample from one domain to another domain. Motiian~\textit{et al.}~\cite{Motiian_2017_ICCV} proposed to minimize the semantic alignment loss as well as the separation loss based on deep learning models. Carlucci~\textit{et al.}~\cite{carlucci2019domain} proposed to shuffle the image patch to learn generalized feature representation. Recently, Wang~\textit{et al.}~\cite{wang2020heterogeneous} proposed to extend MixUp \cite{zhang2017mixup} to the settings of multiple domains for heterogeneous domain generalization task.  As for meta-learning based techniques, Li~\textit{et al.}~\cite{li2018learning} proposed to transfer the idea in \cite{finn2017model} to the ``unseen" target domain setting by randomly constructing meta-train and meta-test set, which was further extended by Balaji~\textit{et al.}~\cite{balaji2018metareg} with a scheme to learn a regularization network to improve the scalability of domain generalization. 

\textbf{Cross-Domain Medical Imaging Classification. }
Due to the various imaging protocols, device vendors and patient populations, we may also encounter the problem of distribution shift in clinical practice. To tackle such domain shift problem, image synthesis can be adopted through Generative Adversarial Networks \cite{goodfellow2014generative,zhu2017unpaired}  to mitigate the domain shift problem. For example, Zhang~\textit{et al.} \cite{zhang2018task} proposed to leverage CycleGAN for medical imaging problem to transfer the knowledge from CT images to X-ray images. Chen~\textit{et al.} \cite{chen2019synergistic} conducted domain translation from MR to CT domain for heart segmentation problem. With few label information available in target domain, Zhang~\textit{et al.}~\cite{zhang2018translating} proposed to conduct segmentation and data synthesis jointly to segment heart chambers in both CT and MR domain. Dou~\textit{et al.}~\cite{dou2018pnp} proposed a two parallel domain-specific encoders and a decoder where the weights are shared between domains to boost the performance of training on both single domain and cross-domain scenario. When target domain data are not available, Zhang~\textit{et al.} \cite{zhang2019unseen} proposed to conduct data augmentation on source domain to achieve better generalization capability for medical imaging classification task. Yoon~\textit{et al.} \cite{yoon2019generalizable} proposed to learn generalized feature representation through classification and contrastive semantic alignment technique \cite{Motiian_2017_ICCV}. More recently, Dou~\textit{et al.} \cite{dou2019domain} proposed to conduct meta-learning with global class alignment as well as local sample clustering regularization for medical imaging classification task.

\section{Methodology} 

\textbf{Preliminary. } We denote the training samples from multiple source domains on a joint space $\mathcal{X} \times \mathcal{Y}$ as $\mathcal{D}=\{(x_i^k,y_i^k)\}_{i=1}^{N_k}, k \in \{1,2,...,K\}$, where $x_i^k$ denotes the $i$th training sample from the $k$th source domain, $N_k$ is the number of samples in the $k$th domain, and  $y_i^k$ is the corresponding label groundtruth. The goal of domain generalization is that given a sample $x_T$ from an unseen domain, we aim to predict its output $\hat{y}^T$ through a trained classifier. 

We provide a framework named Linear-Dependency Domain Generalization (LDDG) that improves the generalization capability of medical imaging classification. By assuming that there exists linear dependency in the latent feature space among various domains based on a certain task, we propose to regularize the latent feature space by modeling intra-class variation among multiple source domains through rank constraint meanwhile matching the distribution of latent features extracted from multiple source domains to a pre-defined distribution prior, such that the shareable information among domains can be learned.   
The details of our proposed method are introduced below.

\textbf{Linear-Dependency Modeling. }
Directly training a classification network with a task-specific loss (e.g. cross-entropy loss) may not be feasible, as in the field of medical imaging, it is difficult to collect large and diverse datasets, which can lead to poor generalization on a new and ``unseen" domain. 
To improve the generalization capability of medical imaging classification, in \cite{zhang2019unseen}, based on the observation that medical image domain variability is more compact compared with other image data, a data augmentation approach was proposed based on three different aspects: image quality (e.g., blurriness), image appearance (e.g., brightness) and spatial shape (e.g., rotation, scaling), by assuming other characteristics are supposed to be  more consistent.  However, we empirically find that it is challenging to choose a suitable augmentation type as well as its magnitude for a specific medical imaging classification task. 

Inspired by the observation that most of the aforementioned augmentation processes can be conducted through linear transformation, we assume that there exists linear dependency on the latent feature space. To be more specific, by assuming that we have a medical image batch collected from $K$ different  domains with label $c$ as $\{x_{i_1,c}^1,x_{i_2,c}^2,...,x_{i_K,c}^K\}$, there exists a set of parameters $\{\alpha_1,\alpha_2,...,\alpha_K\}$ such that the corresponding latent features  $\{z_{i_1,c}^1,z_{i_2,c}^2,...,z_{i_K,c}^K\}$ hold the property that $z_{i_j,c}^j = \alpha_1 z_{i_1,c}^1 + \alpha_2 z_{i_2,c}^2 + ... + \alpha_{j-1} z_{i_{j-1},c}^{j-1} + \alpha_{j+1} z_{i_{j+1},c}^{j+1} + ... + \alpha_{K} z_{i_{K},c}^{K}$ for different $j$. In other words, there exists a dominant eigenvalue capturing the category information of the matrix $[z_{i_1,c}^1,z_{i_2,c}^2,...,z_{i_K,c}^K]$. Therefore, given a sample mini-batch  denoted by $\mathcal{X} = \{x_i^k\}$, we can obtain the corresponding latent features as $\mathcal{Z}$ through a posterior $q(z|x)$ parameterized by an encoder. By further conducting mode-1 flattening $\mathcal{Z}$ as $\mathbf{Z}$\footnote{We assume that the first dimension is associated with sample index.}, our proposed rank regularization can be given as $rank(\mathbf{Z}) = C$,
where $C$ is the number of categories of a specific task.

Setting the rank of $\mathbf{Z}$ to $C$ is equivalent to minimize the $(C+1)$th singular value of  $\mathbf{Z}$. By denoting it as $\sigma_{C+1}$, we can reformulate the rank loss and compute its sub-gradient as 
\begin{eqnarray}\label{eq:rank}
\mathcal{L}_{rank} = \sigma_{C+1}, \qquad \frac{\partial \sigma_{C+1}}{\partial \mathbf{Z}} = \mathbf{U}_{:,C+1}\mathbf{V}_{:,C+1}^\top,
\end{eqnarray} 
where $\mathbf{U}$ and $\mathbf{V}$ are obtained through SVD as $\mathbf{Z} = \mathbf{U}\mathbf{\Sigma}\mathbf{V}^\top$.

Noted that it is quite common to impose low-rank regularization in the final objective for domain generalization task \cite{xu2014exploiting,li2017deeper}. Our proposed method is different from these methods in two folds, 1) we impose rank regularization based on the latent feature space while the existing works imposed low-rank regularization on classifier parameters, which are not computational efficiency; 2) we set the rank to be a specific number instead of simply conducting low-rank regularization, we show in the experimental section that it can lead to better performance. 

\textbf{Distribution Alignment. }
In addition to modeling the linear dependency in latent feature space, we further propose to extract shareable information among multiple domains, such that a more transferable feature representation can be learned which can benefit generalization capability of deep neural networks. Existing techniques aimed to either minimize domain variance through distribution alignment between domain pairs \cite{muandet2013domain,li2018domain} or conduct local sample clustering through contrastive loss or triplet loss \cite{Motiian_2017_ICCV,dou2019domain}. However, based on our empirical analysis, the aforementioned technique may suffer from overfitting problem to the source domains, which is not surprising as the distribution of ``unseen" target domain may not match the distribution of multiple source domains. Thus, simply minimizing domain variance or local sample cluttering on source domains may not be able to generalize well to the ``unseen" one.  To this end, we propose to conduct variational encoding \cite{kingma2013auto} by adopting Kullback-Leibler (KL) divergence, which aims to match the latent features from multiple source domains to a pre-defined prior distribution. In our work, we adopt Gaussian distribution $\mathcal{N} \sim (0,1)$  as the prior distribution, which is computationally tractable through reparameterization trick \cite{kingma2013auto}. The KL divergence can be formulated as $KL(q(\mathcal{Z}|\mathcal{X})||\mathcal{N} \sim (0,1))$,  where $\mathcal{Z}$ is the latent features defined in the previous section. 
 We show in the next section that by jointly conducting linear-dependency modeling and distribution regularization through KL divergence can lead to an upper bound of empirical risk from any ``unseen" but related domains.  

\textbf{Theoretical Analysis. }
In this section, we provide the theoretical analysis of our proposed framework. In particular, We show that our framework can lead to an upper bound of expected loss on ``unseen" but related target domain.  We first make the following assumptions:


\begin{assumption}
	For any latent feature belong to domain $T$ with label $c$, it can be represented by data from other related domains, \textit{i.e.}, $q(z_{i_T,c}^T|x_{i_T,c}^T) = \sum_{j=1}^{K} \beta_j q(z_{i_j,c}^j|x_{i_j,c}^j)$, where $\beta_j >= 0$ and $\|\beta\| \leq M$ , $\{x_{i_1,c}^1,x_{i_2,c}^2,...,x_{i_K,c}^K\}$ belong to the same category as $x_T$.
\end{assumption}

Noted that Assumption 1 is a mild assumption in the field of medical imaging classification task and is also reasonable in our setting as we restrict the rank of latent features to be the number of category, such that there exists linear dependency based on the latent features belonging to the same category. 

We further make assumption on the loss function $\mathcal{L}$ based on the output of classifier.
\begin{assumption}
	(1) $\mathcal{L}$ is non-negative and bounded. (2) $\mathcal{L}$ is convex: $\mathcal{L}(\sum_j \lambda_j y_j,y) \leq \sum_j \lambda_j \mathcal{L}(y_j,y)$, where $\lambda_j \geq 0$ and $\sum_j \lambda_j=1$.
\end{assumption}

Note that this assumption is easy to be satisfied for several standard loss functions (e.g. cross-entropy loss).

Under these assumptions, we have the following theorems.  

\begin{theorem}
	Given a sample $x_{i_T,c}^T$ from target domain $T$ where the distribution of its latent variable is represented as $q(z_{i_T,c}^T|x_{i_T,c}^T) = \sum_{j=1}^{K} \beta_j q(z_{i_j,c}^j|x_{i_j,c}^j)$, its latent variable is within the manifold of $\mathcal{N} \sim (0,1)$.	
\end{theorem}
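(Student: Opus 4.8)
The plan is to exploit the fact that variational encoding forces each source posterior into the Gaussian family, together with the closure of Gaussians under linear combination. Concretely, because $q(\mathcal{Z}|\mathcal{X})$ is realized through the reparameterization trick, for every source sample the posterior $q(z_{i_j,c}^j|x_{i_j,c}^j)$ is a diagonal Gaussian $\mathcal{N}(\mu_j,\Sigma_j)$, and the KL regularization toward $\mathcal{N} \sim (0,1)$ keeps each $(\mu_j,\Sigma_j)$ in a bounded neighborhood of the standard-normal parameters. I would therefore read the phrase ``within the manifold of $\mathcal{N} \sim (0,1)$'' as the assertion that the target posterior is again a member of this Gaussian statistical manifold, so that the whole task reduces to showing that $z_{i_T,c}^T$ is Gaussian.

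First I would substitute the representation from Assumption 1, namely $q(z_{i_T,c}^T|x_{i_T,c}^T)=\sum_{j=1}^{K}\beta_j q(z_{i_j,c}^j|x_{i_j,c}^j)$, and interpret it at the level of the latent random variable, i.e. $z_{i_T,c}^T=\sum_{j=1}^{K}\beta_j z_{i_j,c}^j$ with each $z_{i_j,c}^j\sim\mathcal{N}(\mu_j,\Sigma_j)$. Second, invoking the standard closure property that a linear combination of Gaussian variables is Gaussian, I would conclude that $z_{i_T,c}^T$ is Gaussian with mean $\sum_j \beta_j\mu_j$ and covariance determined by the $\beta_j$ and $\Sigma_j$. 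Third, I would use $\beta_j\geq 0$ and $\|\beta\|\leq M$ to bound these parameters, so that the induced mean and covariance remain in a controlled region around $(0,1)$; this places the target posterior on the same Gaussian manifold as the prior, which is exactly the claim.

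The hard part will be making the manipulation of the sum rigorous. The expression $\sum_j\beta_j q(\cdot)$ is literally a weighted sum of densities, which in general is a mixture and is \emph{not} Gaussian; the argument only succeeds if the coefficients are understood as acting on the random variables (equivalently, on the reparameterized means and noise) rather than on the density functions themselves. I would therefore state explicitly that Assumption 1 is to be read as a linear-combination statement at the variable level, and I would address the cross-covariances between the $z_{i_j,c}^j$: the clean conclusion follows at once under an independence or joint-Gaussianity assumption on the source latents, and otherwise requires that the joint latent vector be Gaussian so that any linear image of it stays Gaussian. Pinning down this interpretation, and formalizing the informal notion of lying ``within the manifold,'' is where the genuine care is needed; the Gaussian algebra itself is routine.
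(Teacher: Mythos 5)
There is a genuine gap, and it stems from the reinterpretation you flag as the ``hard part.'' The paper's proof never needs the target posterior to be Gaussian, so the whole apparatus of Gaussian closure under linear combination is not just unnecessary --- it leads to a weaker conclusion than the theorem requires. The paper works directly at the density level with the mixture $q_T(z)=\sum_{j=1}^{K}\beta_j q_j(z)$ and establishes the inequality
\begin{equation*}
KL\bigl(q_T(z)\,\|\,q_*(z)\bigr) \;\leq\; \sum_{j=1}^{K}\beta_j\, KL\bigl(q_j(z)\,\|\,q_*(z)\bigr),
\end{equation*}
obtained by writing $q_T(z)=q_j(z)\bigl[1+(q_T(z)/q_j(z)-1)\bigr]$ inside the logarithm and applying $\log(1+x)\leq x$ together with the normalization of the densities (in effect, a convexity-of-KL argument). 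Since the distribution-alignment term in the training objective drives each source $KL(q_j\|q_*)$ to zero, this bound forces $KL(q_T\|q_*)$ to zero as well; that is what ``within the manifold of $\mathcal{N}\sim(0,1)$'' means here, and it is exactly the form in which Theorem~2 consumes the result (its proof opens with $q_T(z)=q_*(z)$).

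Your variable-level reading, by contrast, cannot deliver this even in the ideal limit. Suppose training succeeds perfectly and every source posterior equals $\mathcal{N}(0,1)$ exactly. Then $z_{i_T,c}^T=\sum_j \beta_j z_{i_j,c}^j$ is Gaussian with variance $\|\beta\|_2^2$ under independence (plus cross-covariance terms otherwise), and nothing in Assumption~1 forces $\|\beta\|_2=1$; the constraint $\|\beta\|\leq M$ only confines the variance to $[0,M^2]$. So your conclusion --- membership in the Gaussian statistical manifold with parameters in an $M$-dependent region --- does not tighten as the source KLs are minimized, and it is strictly weaker than the statement $q_T=q_*$ that the downstream risk bound needs. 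The missing idea is precisely the mixture-level KL bound above: it lets you bypass the non-Gaussianity of $\sum_j \beta_j q_j$ entirely, because closeness to the prior in KL is all that matters, not the parametric form of $q_T$.
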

 
\begin{proof}
	For simplicity, we first denote the distribution on latent variables as $q_i(z),i=\{1,2,...,K,T\}$ as well as the Gaussian prior $\mathcal{N} \sim (0,1)$  as $q_*(z)$.  We can obtain the following upper bound,
	\begin{eqnarray}
	\!\!&\!\! \!\!&\!\! KL(q_T(z)||q_*(z))  =  \sum_{j=1}^{K}\beta_j \int_z q_j(z) \log \frac{q_T(z)}{q_*(z)}dz \nonumber \\
	\!\!&\!\! = \!\!&\!\! \sum_{j=1}^{K}\beta_j \int_z q_j(z) \log \frac{q_j(z)[1+(q_T(z)/q_j(z)-1)]}{q_*(z)}dz \leq \sum_{j=1}^{K} \beta_j KL(q_j(z)\|q_*(z)), 
	\end{eqnarray}
	where we use $\log (1+x) \leq x$ and $\int q_T(z)dz = \int q_j(z)dz = 1$. As $KL(q_j(z)\|q_*(z))$ is minimized according to our proposed distribution alignment, $KL(q_T(z)||q_*(z))$ is then minimized.
	
	This completes the proof.
\end{proof}

Theorem 1 shows that the latent feature of any unseen but related domain lies in the manifold of pre-defined prior. With the help of Theorem 1, we can further derive the upperbound of empirical risk of target domain.

\begin{theorem}
	Given data from $K$ source domains,  where the empirical risk of domain $j$ is given as  $\mathcal{L}(\hat{y}^j,y) = \epsilon_j \leq \epsilon$,     the expected loss $\mathcal{L}(\hat{y}^T,y)$ is at most $M\epsilon + \log C$, where $C$ denotes the number of category given a task, if the classification layer is linear with softmax normalization trained by $\mathcal{L}$ which is a cross-entropy loss. 
\end{theorem}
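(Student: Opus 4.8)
The plan is to work in logit (pre-softmax) space and to exploit the linearity of the classification head together with the convexity supplied by Assumption 2, rather than the pure linearity-of-expectation argument (which would not produce a $\log C$ term). First I would write the target cross-entropy as $\mathcal{L}(\hat{y}^T,y) = \mathrm{LSE}(a_T) - (a_T)_c$, where $a_T = W z^T + b$ is the logit vector produced by the linear head on the target latent feature and $\mathrm{LSE}(a)=\log\sum_{k=1}^{C} e^{a_k}$ is the log-partition function. Because the head is linear, Assumption 1 (applied to the mean latent feature, or equivalently to $q(z\mid x)$ under the reparameterization trick) transfers to the logits, giving $a_T = \sum_{j=1}^{K}\beta_j a_j$ up to the bias term, where $a_j$ is the logit vector of the $j$th same-class source sample.

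Next I would separate the two pieces of the loss. The linear part is exact: $(a_T)_c = \sum_j \beta_j (a_j)_c$, so it reproduces the $-a_c$ contributions of the source losses weighted by $\beta_j$. The nonlinear part, $\mathrm{LSE}(a_T)=\mathrm{LSE}(\sum_j \beta_j a_j)$, is where the $\log C$ term is born, the key observation being $\mathrm{LSE}(0)=\log C$, i.e. the log-partition of the all-zero logit equals the cross-entropy of the uniform prediction. Writing $S=\sum_j \beta_j$ and completing the non-negative weights $\{\beta_j\}$ with a residual weight $(1-S)$ placed on the zero logit, I would apply Jensen's inequality to the convex map $\mathrm{LSE}$ to get $\mathrm{LSE}(\sum_j \beta_j a_j) \le \sum_j \beta_j\, \mathrm{LSE}(a_j) + (1-S)\log C$. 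Combining with the exact linear part yields $\mathcal{L}(\hat{y}^T,y) \le \sum_j \beta_j\, \mathcal{L}(\hat{y}^j,y) + (1-S)\log C = \sum_j \beta_j \epsilon_j + (1-S)\log C$.

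Finally I would bound each summand: using $\beta_j \ge 0$, $\epsilon_j \le \epsilon$, and $\sum_j \beta_j \le \|\beta\| \le M$ gives $\sum_j \beta_j \epsilon_j \le M\epsilon$, while $(1-S)\log C \le \log C$, so $\mathcal{L}(\hat{y}^T,y) \le M\epsilon + \log C$. Theorem 1 enters in the background: it certifies that $z^T$ lies in the manifold of the prior $\mathcal{N}(0,1)$ on which the source-regularized classifier is well behaved, making Assumption 1's linear reconstruction of $z^T$ from same-class source features legitimate and the logit decomposition meaningful.

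The step I expect to be the main obstacle is the convexity bookkeeping on the log-partition, specifically the regime where $S=\sum_j\beta_j$ exceeds one. The clean Jensen step that manufactures $(1-S)\log C$ is valid when $S\le 1$, so that $\{\beta_j, 1-S\}$ is a genuine convex combination; for $S>1$, which is allowed since only $\|\beta\|\le M$ is assumed, the residual-weight trick must be rephrased (e.g. by normalizing to $\tilde{\beta}_j=\beta_j/S$, bounding $\mathrm{LSE}(S\bar{a})$ in terms of $\mathrm{LSE}(\bar{a})$ and $\log C$, and re-inflating by $S\le M$). Making this last estimate land exactly on $M\epsilon + \log C$ rather than a looser constant is the delicate part, and I would spend most of the effort verifying that the uniform-prediction residual can always be charged at no more than $\log C$.
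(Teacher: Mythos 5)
Your proposal is correct and follows essentially the same route as the paper's proof: both rest on the linear head transferring the combination $\sum_j \beta_j$ to the predictions/logits, a Jensen/convexity step on a normalized combination, and the Log-Sum-Exp bound $\log\sum_i e^{a_i} \le \max_i a_i + \log n$ as the source of the $\log C$ term. The obstacle you flag for $S=\sum_j\beta_j>1$ is resolved in the paper exactly by the fix you sketch: factor out $\|\beta\|$ (implicitly the $\ell_1$ norm, so the inner weights $\beta_j/\|\beta\|$ sum to one), apply the convexity assumption to that normalized combination, and absorb the scale $\|\beta\|\le M$ through the LSE bound, which lands precisely on $M\epsilon+\log C$.
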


\begin{proof}
Based on Theorem 1, we have $q_T(z)=q_*(z)$. Thus, we have the following upper bound,
	\begin{eqnarray}
	\!\!&\!\! \!\!&\!\!  \int_{z } \mathcal{L}(\hat{y}^T,y)q_T(z)dz   =  \int_{z }   \mathcal{L}(\sum_{j=1}^{K} \beta_j \hat{y}^j,y) q_*(z) dz = \int_{z }   \mathcal{L}(\|\mathbf{\beta}\| \sum_{j=1}^{K} \frac{\beta_j}{\|\mathbf{\beta}\|} \hat{y}^j,y) q_*(z) dz \nonumber \\
	\!\!&\!\!  \leq \!\!&\!\!  \sum_{j=1}^{K} \frac{\beta_j}{\|\mathbf{\beta}\|  } \int_{z }\mathcal{L}( \|\mathbf{\beta}\| \hat{y}^j,y) q_j(z) dz \leq \sum_{j=1}^K \frac{\beta_j}{\|\beta\|} M \epsilon + \log C = M \epsilon + \log C,
	\end{eqnarray}
	where $\mathcal{L}(\cdot)$ denotes the cross-entropy loss with softmax operation. Noted that we only adopt a linear layer for classifier, thus $\hat{y}^{T}=\sum_{j=1}^{K} \beta_j \hat{y}^{j}$ holds based on Assumption 1. We also utilize the bounds of Log-Sum-Exp function $f(a) \leq \max\{a_1,a_2,…,a_n\}+\log n$, where $f(a) = \log \sum_{i=1}^n \exp(a_i)$. In our work, $\{a_1,a_2,...a_n\}$ corresponds to the softmax output of $n$ different nodes. Thus, the second line of proof holds. 
	
	This completes the proof.
\end{proof}

Theorem 2 shows that our proposed method has a good generalization capability. If the empirical risks on source domains are small, the empirical risk on target domain is also expected to be small. 

\textbf{Model Training. }
Our proposed architecture consists of three part, a feature extractor $Q_{\theta}$, a variational encoding network $F_{\omega}$,  and a classification network $T_{\phi}$. Regarding the classification network, we only adopt a linear module (e.g. convolutional layer, linear layer) without any non-linear processing, such that our assumption can be satisfied.  Images $\mathcal{X} = \{x_i^k\}$ are first fed into the feature extractor $Q_{\theta}$ to obtain the latent features, and then the latent features are resampled \cite{kingma2013auto} through variational encoding network $F_{\omega}$,  finally the classification network $T_{\phi}$ outputs the corresponding prediction $\{\hat{y}_i^k\}$. A cross-entropy loss together with rank and distribution regularization to penalize the difference between $\{\hat{y}_i^k\}$ and the groundtruth label $\{{y}_i^k\}$, the distribution difference between latent features and the Gaussian prior as well as the rank of the latent features. In summary, our model can be trained by minimizing the following objective as
\begin{equation}
\mathcal{L}_{obj} = \sum_{i,k} \mathcal{L}_c(\hat{y}_i^k, {y}_i^k) + \lambda_1 \mathcal{L}_{rank} + \lambda_2 KL(q(\mathcal{Z}|\mathcal{X})||\mathcal{N} \sim (0,1)),
\end{equation}
where $\mathcal{L}_c(\hat{y}_i^k, {y}_i^k)$ denotes the cross-entropy loss with softmax operation, $\mathcal{L}_{rank}$ is the rank loss defined in Equation \ref{eq:rank}. 

\section{Experiments}
In this section, we evaluate our proposed method based on two different medical imaging classification tasks:  skin lesion classification task  and gray matter segmentation task of spinal cord. {The detail of architectures and experimental settings can be found in supplementary materials.}

\subsection{Skin lesion classification}
We adopt seven public skin lesion datasets, including HAM10000 \cite{tschandl2018ham10000}, Dermofit (DMF) \cite{dermofit}, Derm7pt (D7P) \cite{d7p}, MSK \cite{msk_uda_sonic}, PH2 \cite{ph2}, SONIC (SON) \cite{msk_uda_sonic}, and UDA \cite{msk_uda_sonic},  which contain skin lesion images  collected from different equipments. We follow the protocol in \cite{yoon2019generalizable} by choosing seven-category subset from these datasets, including melanoma (mel), melanocytic nevus (nv), dermatofibroma (df), basal cell carcinoma (bcc), vascular lesion (vasc), benign keratosis (bkl), and actinic keratosis (akiec).
Each dataset is randomly divided into 50\% training set, 20\% validation set and 30\% testing set, where the relative class proportions are maintained across dataset partitions. As suggested in \cite{yoon2019generalizable}, for each setting, we use one dataset from DMF, D7P, MSK, PH2, SON and UDA as target domain and the remaining datasets together with HAM10000 as source domains. We use a ResNet18 model \cite{he2016deep} pretrained on ImageNet as the backbone for our proposed method as well as other baselines. 




\begin{table}[!t]
  \centering
  \caption{Domain generalization results on the skin lesion classification task. We repeat experiment for 5 times for each technique and report the mean value and standard deviation.}
    \scalebox{0.85}{
    \begin{tabular}{c|ccccc}
    \toprule
    Target & DeepAll & MASF \cite{dou2019domain} & MLDG \cite{li2018learning} & CCSA \cite{yoon2019generalizable}& LDDG (Ours) \\ \hline
    DMF    & 0.2492$ \pm 0.0127 $  & 0.2692$ \pm 0.0146 $ & 0.2673$ \pm 0.0452 $ & 0.2763$ \pm 0.0263 $ & \bf{0.2793}$ \pm 0.0244 $ \\
    D7P    & 0.5680$ \pm 0.0181 $  & 0.5678$ \pm 0.0361 $ & 0.5662$ \pm 0.0212 $ & 0.5735$ \pm 0.0227 $ & \bf{0.6007}$ \pm 0.0208 $ \\
    MSK    & 0.6674$ \pm 0.0083 $  & 0.6815$ \pm 0.0122 $ & 0.6891$ \pm 0.0167 $ & 0.6826$ \pm 0.0131 $ & \bf{0.6967}$ \pm 0.0193 $ \\
    PH2    & 0.8000$ \pm 0.0167 $  & 0.7833$ \pm 0.0101 $ & 0.8016$ \pm 0.0096 $ & 0.7500$ \pm 0.0419 $ & \bf{0.8167}$ \pm 0.0096 $ \\
    SON    & 0.8613$ \pm 0.0296 $  & 0.9204$ \pm 0.0227 $ & 0.8817$ \pm 0.0198 $ & 0.9045$ \pm 0.0128 $ & \bf{0.9272}$ \pm 0.0117 $ \\
    UDA    & 0.6264$ \pm 0.0312 $  & 0.6538$ \pm 0.0196 $ & 0.6319$ \pm 0.0284 $ & 0.6758$\pm 0.0138 $ & \bf{0.6978}$ \pm 0.0110 $ \\ \hline
    Avg    & 0.6287 & 0.6460 & 0.6396 & 0.6438 & \bf{0.6697} \\ 
    \bottomrule
    \end{tabular}
    } 

\label{tab:skin_result}%
\end{table}%

\textbf{Results:} We compare our method with state of the art domain generalization methods, including MASF \cite{dou2019domain}, MLDG \cite{li2018learning}, and CCSA \cite{yoon2019generalizable}, which have shown the capability to generalize across domains in the field of medical imaging. We report the baseline results by tuning the hyper-parameters in a wide range. We also adopt the baseline by directly training the model with the classification loss, which is referred as ``DeepAll". The results are shown in Table \ref{tab:skin_result}.

As we can see, all the domain generalization based techniques can outperform the DeepAll by directly training on source domains with classification loss. Among the domain generalization methods, MASF can achieve relatively better performance compared with MLDG and CCSA, as it is built upon meta-learning mechanism by considering both global and local based domain alignment. Compared with all baselines, our proposed algorithm can achieve better performance in a clear margin, which is reasonable as our proposed training mechanism leverage the advantage of both data augmentation and domain alignment, which is less likely to suffer from overfitting problem. We can also observe that in some cases, all algorithms can perform relatively well, which we conjecture that the domain gap between source and target domain is relatively small. However, the performances are not desired in some cases (e.g. when using DMF as target domain), which may be due to the large domain gap between source and target domain. This observation is also consistent with the results reported in \cite{yoon2019generalizable}. We also experiment using the data augmentation based technique BigAug \cite{zhang2019unseen}  by considering a wide-range of augmentation spaces but find it cannot yield competitive performance and the results are omitted here for brevity. We conjecture the reason that the augmentation types may not be suitable for skin lesion classification task. 

In practice, it is also likely that we only have the data from one single domain during training. To further analyze the effectiveness of our proposed method under this scenario, we consider HAM10000 for training and the others for testing for skin lesion classification task. Besides directly training with classification loss (DeepAll), we also compare with CCSA \cite{yoon2019generalizable} and MixUp \cite{zhang2017mixup}. Noted that other domain generalization baselines are not applicable in this case, as they require multiple domains available to simulate domain shift. The results are shown in Table \ref{tab:single}. In most of the cases, our proposed method can outperform the DeepAll, CCSA  as well as MixUp. For CCSA, as the contrastive loss is applied on only one source domain while target domain is unseen, it is likely to suffer from overfitting problem. For MixUp, the combination is conducted only in a convex manner, which may not be able to generalize  well to the out-of-distribution target domain.

\begin{table}[t]
  \centering
  \caption{Domain generalization results with HAM10000 as source domain. }
    \scalebox{0.8}{
    \begin{tabular}{cccccccc}
    \toprule
         &  DMF   & D7P   & MSK   & PH2   & SON   & UDA & Average \\
    \midrule
    DeepAll & 0.3003     & 0.4972      &    0.1667   &  0.4945     &   0.5025    &  0.4945 & 0.4093\\
    CCSA \cite{yoon2019generalizable}  & 0.2762 & 0.5082  &  0.4652 & 0.4667  & 0.5275  & 0.5055 & 0.4582\\
    MixUp \cite{zhang2017mixup}& \textbf{0.3514} & 0.4029  & 0.3000 & 0.4333 & 0.6296 & 0.4615 & 0.4298\\
    Ours  & 0.2943 & \textbf{0.5191}  & \textbf{0.5087} & \textbf{0.5500} & \textbf{0.6949} & \textbf{0.5714} & \textbf{0.5231}\\
    \bottomrule
    \end{tabular}%
    }
  \label{tab:single}%
\end{table}%



\begin{table}[!t]
  \centering
  \caption{Domain generalization results on gray matter segmentation task. }
    \scalebox{0.7}{
    \subtable[DeepAll]{
    \begin{tabular}{cc|ccccc}
    \toprule
    source & target & DSC   & CC    & JI & TPR   & ASD \\
    \midrule
    2,3,4 & 1     & 0.8560 & 65.34 & 0.7520 & 0.8746 & 0.0809 \\
    1,3,4 & 2     & 0.7323 & 26.21 & 0.5789 & 0.8109  & 0.0992 \\
    1,2,4 & 3     & 0.5041 & -209  & 0.3504 & 0.4926 & 1.8661\\
    1,2,3 & 4     & 0.8775 & 71.92  & 0.7827 & 0.8888 & 0.0599\\
    \midrule
    \multicolumn{2}{c|}{Average} & 0.7425	& -11.4 & 0.6160 & 0.7667 &	0.5265
  \\
    \bottomrule
    \end{tabular}%
    }

    
    \subtable[Probabilistic U-Net \cite{kohl2018probabilistic}]{
    \begin{tabular}{cc|ccccc}
    \toprule
    source & target & DSC   & CC    & JI & TPR   & ASD \\
    \midrule
    2,3,4 & 1     & 0.8387 & 59.94 & 0.7276  & 0.8943 & 0.1820 \\
    1,3,4 & 2     & 0.8067 & 51.53 & 0.6778 & 0.7555 & 0.0580  \\
    1,2,4 & 3     & 0.5113 & -188  & 0.3550 & 0.5638 & 2.0866 \\
    1,2,3 & 4     & 0.8782 & 72.18 & 0.7833 & 0.8910 & 0.2183 \\
    \midrule
    \multicolumn{2}{c|}{Average} & 0.7587 & -1.09 & 0.6359 & 0.7762 & 0.6362  \\
    \bottomrule
    \end{tabular}%
    }

  }  
 
  \scalebox{0.7}{
    \subtable[MASF \cite{dou2019domain}]{
    \begin{tabular}{cc|ccccc}
    \toprule
    source & target & DSC   & CC    & JI & TPR   & ASD \\
    \midrule
    2,3,4 & 1     & 0.8502 & 64.22 & 0.7415 & 0.8903 & 0.2274 \\
    1,3,4 & 2     & 0.8115 & 53.04 & 0.6844 & 0.8161 & 0.0826 \\
    1,2,4 & 3     & 0.5285 & -99.3 & 0.3665 & 0.5155 & 1.8554 \\
    1,2,3 & 4     & \bf{0.8938} & \bf{76.14} & \bf{0.8083} & \bf{0.8991} & 0.0366  \\
    \midrule
    \multicolumn{2}{c|}{Average} & 0.7710 & 23.52 & 0.6502 & 0.7803 & 0.5505  \\
    \bottomrule
    \end{tabular}%
    }

    
    \subtable[MLDG \cite{li2018learning}]{
    \begin{tabular}{cc|ccccc}
    \toprule
    source & target & DSC   & CC    & JI & TPR   & ASD \\
    \midrule
    2,3,4 & 1     & 0.8585 & 64.57 & 0.7489 & 0.8520 & 0.0573 \\
    1,3,4 & 2     & 0.8008 & 49.65 & 0.6696 & 0.7696 & 0.0745 \\
    1,2,4 & 3     & 0.5269 & -108  & 0.3668 & 0.5066 & 1.7708 \\
    1,2,3 & 4     & 0.8837 & 73.60 & 0.7920 & 0.8637 & 0.0451 \\
    \midrule
    \multicolumn{2}{c|}{Average} & 0.7675 & 19.96 & 0.6443 & 0.7480 & 0.4869 \\
    \bottomrule
    \end{tabular}%
    }
  }  

  \scalebox{0.7}{
    \subtable[CCSA \cite{yoon2019generalizable}]{
    \begin{tabular}{cc|ccccc}
    \toprule
    source & target & DSC   & CC    & JI & TPR   & ASD \\
    \midrule
    2,3,4 & 1     & 0.8061 & 50.15 & 0.6801 & 0.8703 & 0.1678 \\
    1,3,4 & 2     & 0.8009 & 50.04 & 0.6687 & 0.8141 & 0.0939 \\
    1,2,4 & 3     & 0.5012 & -112  & 0.3389 & 0.5444 & 1.5480 \\
    1,2,3 & 4     & 0.8686 & 69.61 & 0.7684 & 0.8926 & 0.0449  \\
    \midrule
    \multicolumn{2}{c|}{Average} & 0.7442 & 14.45 & 0.6140 & 0.7804 & 0.4637  \\
    \bottomrule
    \end{tabular}%
    }

    
    \subtable[\textcolor{black}{LDDG (Ours)}]{
    \begin{tabular}{cc|ccccc}
    \toprule
    source & target & DSC   & CC    & JI & TPR   & ASD \\
    \midrule
    2,3,4 & 1     & \bf{0.8708} & \bf{69.29} & \bf{0.7753} & \bf{0.8978} & \bf{0.0411}\\
    1,3,4 & 2     & \bf{0.8364} & \bf{60.58} & \bf{0.7199} & \bf{0.8485} & \bf{0.0416} \\
    1,2,4 & 3     & \bf{0.5543} & \bf{-71.6} & \bf{0.3889} & \bf{0.5923} & \bf{1.5187}  \\
    1,2,3 & 4     & 0.8910 & 75.46 & 0.8039 & 0.8844 & \bf{0.0289} \\
    \midrule
    \multicolumn{2}{c|}{Average} & \textbf{0.7881} & \textbf{33.43} & \textbf{0.6720} & \textbf{0.8058} & \textbf{0.4076}  \\
    \bottomrule
    \end{tabular}%
    }
  }
\label{tab:gm_result}%
\end{table}%

\subsection{Spinal cord gray matter segmentation}
We then consider the task of gray matter segmentation of spinal cord based on magnetic resonance imaging (MRI) to evaluate our proposed method. In particular, we adopt the data from spinal cord gray matter segmentation challenge \cite{prados2017spinal}, which are collected from four different medical centers with different MRI systems (Philips Achieva, Siemens Trio, Siemens Skyra). The voxel size resolutions are ranging from $0.25 \times 0.25 \times 2.5$mm to $0.5 \times 0.5 \times 5$mm. To evaluate the generalization capability of our proposed method, we consider the data collected from one medical center as a domain, which leads to four different domain, namely "site1", "site2", "site3" and "site4", where one domain is adopted as target domain and the remaining are considered as source domains.  We adopt 2D-UNet \cite{ronneberger2015u} as the backbone network by considering the MRI axial slice as input\footnote{Noted that we have tried 3D-UNet as suggested in \cite{prados2017spinal} but find the performances are similar to 2D-UNet in cross-domain scenario.}. Due to the imbalance of the number of voxels belonging to spinal cord gray matter and background in the MRI image, we follow \cite{prados2017spinal} to consider a two-stage strategy in a coarse-to-fine manner: 1) segment the spinal cord area (where the groundtruth of spinal cord is available), 2) segment the gray matter area from the output of 1) for our proposed method as well as baselines for comparison. 

 \textbf{Results:} We compare our method with state of the art domain generalization methods, including MASF \cite{dou2019domain}, MLDG \cite{li2018learning}, CCSA \cite{yoon2019generalizable}, by tuning the baseline hyper-parameters in a wide range. Moreover, we also compare with the Probabilistic U-Net \cite{kohl2018probabilistic} which automatically learned a prior for medical imaging segmentation task. For quantitative evaluation, we use a number of metrics to validate the effectiveness of our method. In particular, the  metrics include three overlapping metrics: Dice Similarity Coefficient (DSC), Jaccard Index (JI) and Conformity Coefficient (CC); One statistical based metrics: Sensitivity (a.k.a. True Positive Rate (TPR))\footnote{We omit True Negative Rate in our case due to the imbalance of the number of voxels belonging to spinal cord gray matter and background in the MRI image.}; One distance based metric: Average surface distance (ASD), which are all performed in 3D. The results are shown in Table \ref{tab:gm_result}. 
 
 As we can observe, our method achieve the best results when using ``site1", ``site2" and ``site3" as target domain based on all metrics, and achieve the best results in ``site4" under ASD, which shows the effectiveness of our proposed method. Among all other domain generalization based methods, MASF can achieve better performance compared with CCSA and MLDG. Such results are consistent with the performance for skin lesion classification task. We also observe that probabilistic U-Net can achieve relatively better performance compared with the ``DeepAll" baseline by directly training on source domain with classification loss. However, it may still suffer from overfitting problem as the learned prior may not generalize well to ``unseen" target domain.    Again, we also evaluate the method proposed in \cite{zhang2019unseen} by considering a wide range of data augmentation parameters but find the results are not desired. We conjecture that the default augmentation types are not suitable for this task. Some results of adopting \cite{zhang2019unseen} can be found in the supplementary materials.

We further show some qualitative results in Figure~\ref{fig:waveform}. As we can see, while the ``DeepAll" baseline as well as other domain generalization based methods fail to segment the gray matter (e.g. when using ``site2" as target domain) or over-segment a large portion of gray matter by extending the segmentation maps to the white matter (e.g. when using ``site3" as target domain), our proposed method can generally achieve better performance compared with all the methods for comparison.

\begin{figure}[!t]
    \centering
    \includegraphics[width=12cm, trim=50 0 0 0]{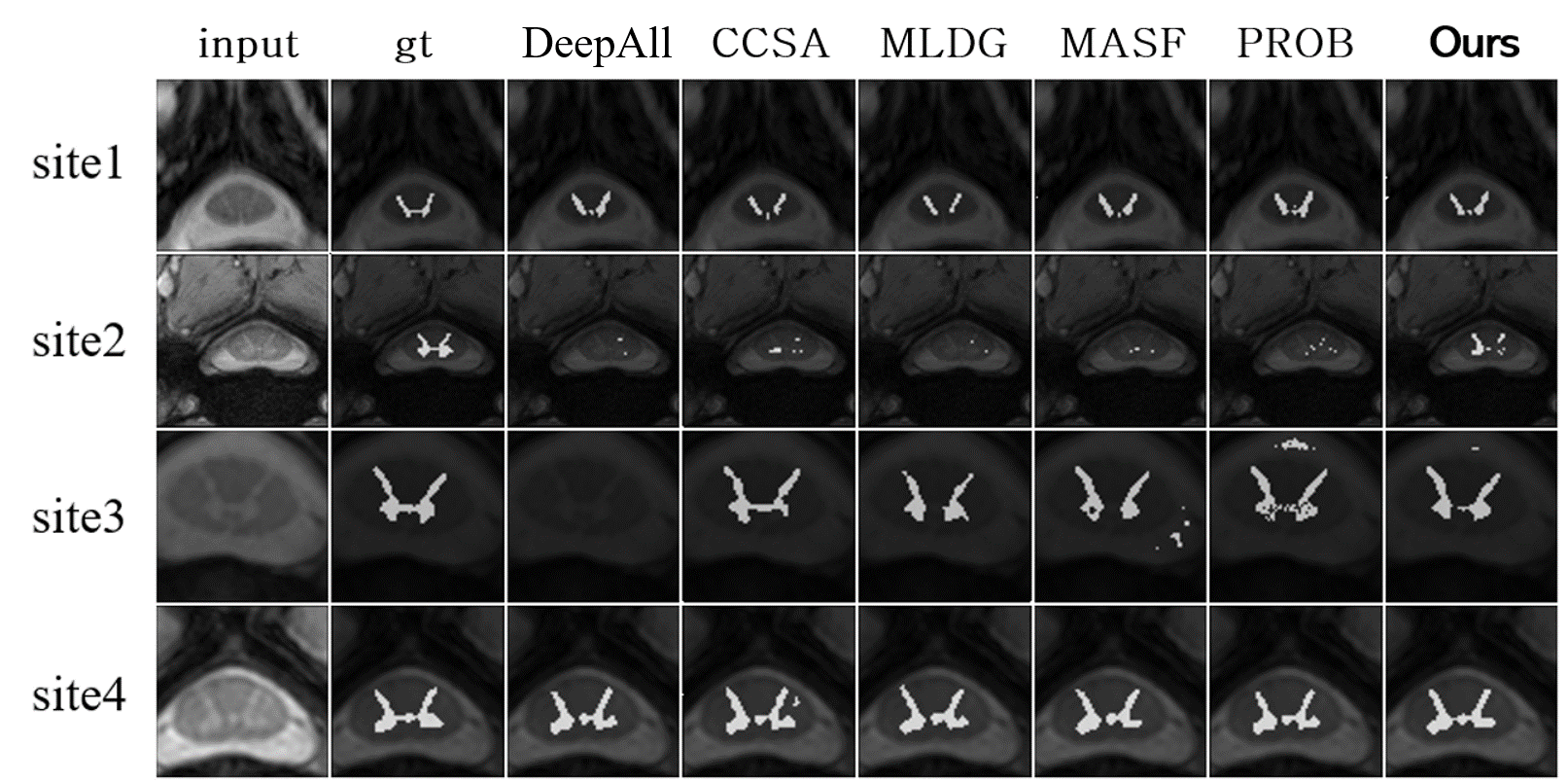}
    \caption{Qualitative comparisons. Each row represents a sample from a specific domain. Each column denotes the input, ground truth (gt) or different methods including DeepAll, CCSA \cite{yoon2019generalizable}, MLDG \cite{li2018learning}, MASF \cite{dou2019domain}, Probabilistic U-Net \cite{kohl2018probabilistic} (abbreviated as PROB here), respectively. As the area of interest in the original samples is very small,  all the samples are center cropped for better visualization.} 
    \label{fig:waveform}
\end{figure}

\subsection{Ablation Study}

we first conduct experiments on skin lesion classification task to understand the impact of different components of our proposed algorithm  by considering UDA as target domain. The results are shown in Table \ref{tab:abalation}, where ``Rank" and ``KL" denote our proposed rank regularization and distribution alignment through KL divergence minimization, respectively. ``LR" denotes the rank regularization with nuclear norm minimization, which is a popular way to conduct low-rank constraint. We have the following observations: 1) both rank regularization and distribution alignment through KL   can benefit the generalization capability for medical imaging classification task, which is reasonable as adopting these two terms jointly can theoretically lead to a empirical risk upper bound on target domain; 2) our proposed rank regularization can outperform the low-rank regularization by directly minimizing the nuclear norm, which is reasonable as the discriminative category specific information can be explored by enforcing the value of rank to be the number of category. 
\begin{figure}[!t]
\makeatletter\def\@captype{table}\makeatother
\begin{minipage}{.5\textwidth}
\centering
\scalebox{0.68}{
\begin{tabular}{ccccccc}
\toprule
\multicolumn{1}{c|}{Rank} & - & LR & LR & - &  \checkmark  &  \checkmark  \\
\multicolumn{1}{c|}{KL}      & - & - & \checkmark &  \checkmark  & - &  \checkmark  \\ \midrule
\multicolumn{1}{c|}{accuracy}  & 0.6264  & 0.6319 & 0.6703 & 0.6703  & 0.6813
  &  0.6978
 \\ \bottomrule
\end{tabular}}
\caption{Ablation study on key components of our method. We choose the skin lession classification task and use UDA as the target domain. }
\label{tab:abalation}
\end{minipage}
\qquad
\makeatletter\def\@captype{figure}\makeatother
\begin{minipage}{.45\textwidth}
\centering
\includegraphics[width=4.6cm, trim=-20 0 -20 0]{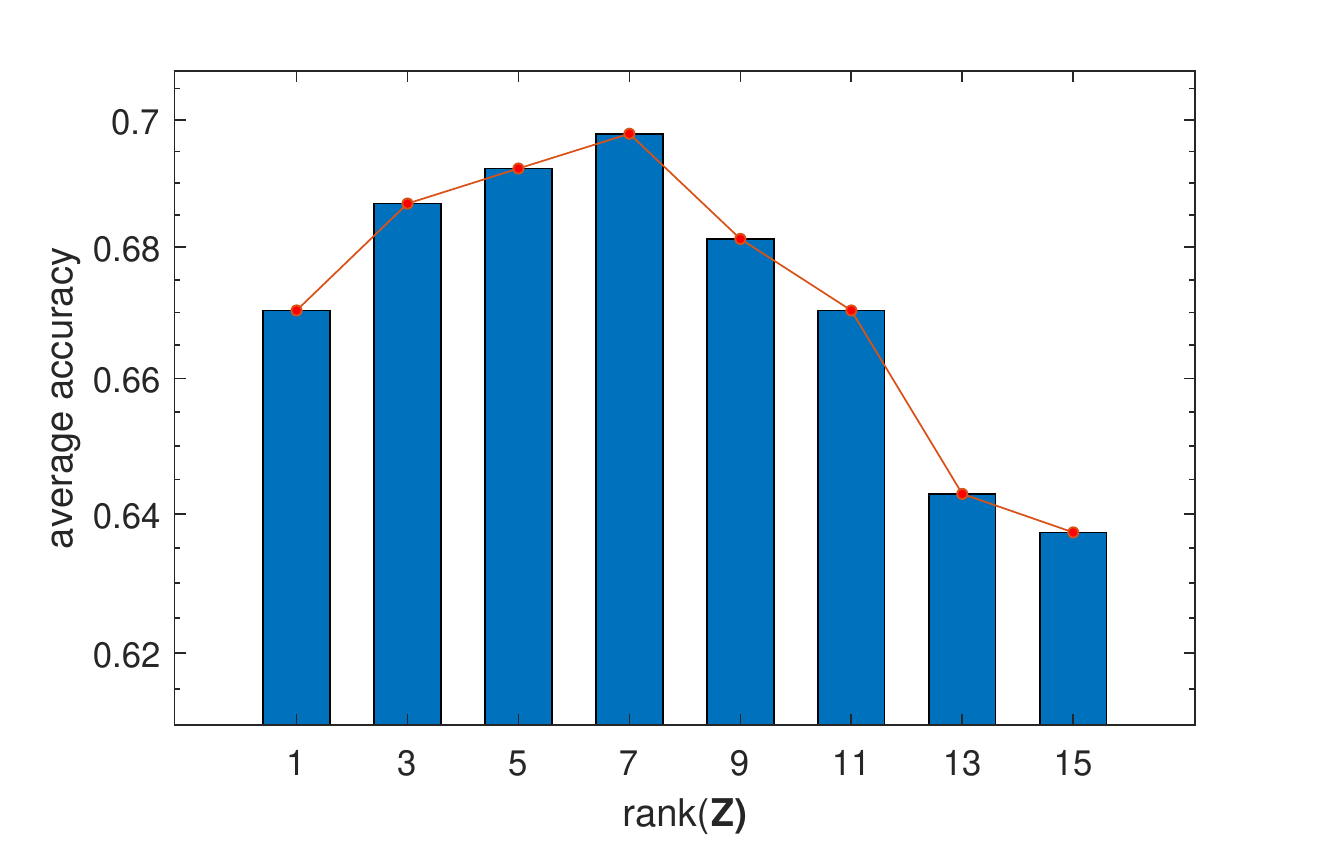}
\caption{The model performance with different $rank(\textbf{Z})$. UDA from skin lesion classification task is selected as the target domain.}
\label{fig:ablation}
\end{minipage}
\end{figure}

We then evaluate the effectiveness of our proposed rank regularization by varying the rank values of latent features by considering UDA as target domain. The results are shown in Figure \ref{fig:ablation}. As we can observe, the average classification accuracy has an ascending trend first, and then drop. In particular, the accuracy reaches its peak when $rank(\textbf{Z})=7$, which is also the number of category in our task. However, we also observe that the performance drops when $rank(\textbf{Z})$ gets larger, which is reasonable as increasing the value of $rank(\textbf{Z})$ may leads to noise information which can have negative impact to the task. Noted that we can still achieve better performance compared with only using low-rank regularization with nuclear norm as shown in Table \ref{tab:abalation}, which is reasonable as the nuclear norm does not take category information into consideration. 

Finally, we are interested in the singular values of our proposed method, which can be computed through SVD. In particular, we conduct experiments on segmentation task for stage 1 and stage 2 by considering ``site1" as the target domain. We show the  convergence results of singular values  in Figure~\ref{fig:rank}. As we can see, our proposed method can converge in 100 epochs for both stage 1 and 2 despite the fact that deep neural networks are highly nonlinear. Regarding the singular value, we find that the magnitude for $\sigma_1$ and $\sigma_2$ are relatively large while other values are much smaller, which is reasonable as we aim to explore the category specific information. Last but not the least, we find that compared with $\sigma_2$, the value of $\sigma_1$ is much larger, which we conjecture the reason that the area of gray matter (or spinal cord) is much smaller than others.  

\begin{figure}[h]
    \centering
    \includegraphics[width=12cm, trim=50 0 50 0]{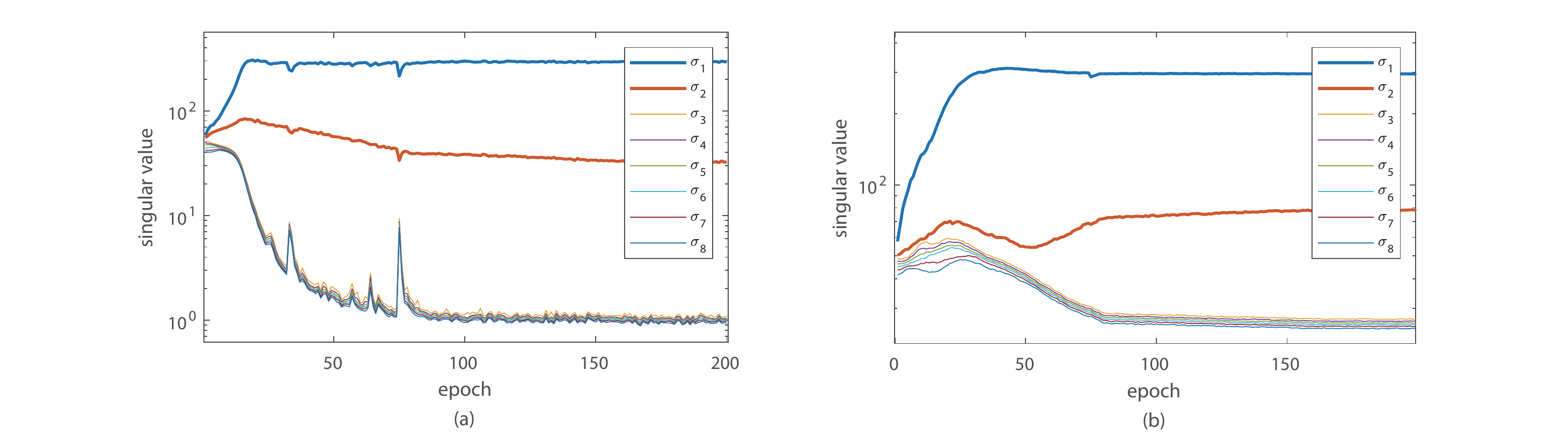}
    \caption{Analysis of singular values, (a) singular values in spinal cord segmentation stage (stage 1), (b) singular values of in gray matter segmentation stage (stage 2). }
    \label{fig:rank} 
\end{figure}


\section{Conclusion}
In this paper, we tackle the generalization problem in medical imaging classification task. Our proposed method takes the advantage of both linear-dependency modeling and domain alignment. In particular, we propose to learn a representative feature space for medical imaging classification task through variational encoding with linear-dependency regularization with a novel rank regularization term. Our theoretical analysis shows that an empirical risk upper bound on target domain can be achieved under our formulation. Experimental results on skin lesion classification task and spinal cord gray matter segmentation task show the effectiveness of our proposed method.  

\textbf{Broader Impact. }
Our proposed method shows reasonable potential in the application of clinically realistic environments especially under the scenarios where  only limited training samples are available and the capturing vendors and environments are diverse. In the short-term, the potential beneficiary of the proposed research lies in that it could significantly alleviate the domain shift problem in medical image analysis, as evidenced in this paper. In the long term, it is expected that the principled methodology could offer new insights in intelligent medical diagnostic systems. One concrete example is that the medical imaging classification functionality can be incorporated into different types of smartphones (with different capturing sensors, resolutions, etc.) to assess risk of skin disease (e.g. skin cancer in suspicious skin lesions) such that the terminal stage of skin cancer can be avoided. However, the medical data can be protected by privacy regulation such that the protected attributes (e.g. gender, ethnicity) may not be released publicly for training purpose. In this sense, the trained model may lack of fairness, or worse, may actively discriminate against a specific group of people (e.g. ethnicity with relatively small proportion of people). 
In the future, the proposed methodology can be feasibly extended to improve the algorithm fairness for numerous medical image analysis tasks and meanwhile guarantee the privacy of the protected attributes.   

\textbf{Acknowledgement. }The research work was done at the Rapid-Rich Object Search (ROSE) Lab, Nanyang Technological University. This research is supported in part by the Wallenberg-NTU Presidential Postdoctoral Fellowship,  the NTU-PKU Joint Research Institute, a collaboration between the Nanyang Technological University and Peking University that is sponsored by a donation from the Ng Teng Fong Charitable Foundation, the Science and Technology Foundation of Guangzhou Huangpu Development District under Grant 2017GH22 and 201902010028, and Sino-Singapore International Joint Research Institute (Project No. 206-A017023 and  206-A018001).

		\bibliographystyle{abbrv}
		\bibliography{nips}

\appendix

\section{Detail of Architectures and Experimental Settings}
\subsection{Experimental Setting for Skin Lesion Classification Task}
We use a ResNet18 model \cite{he2016deep} pretrained on ImageNet without the FC layer as the feature extractor $Q_\theta$ with the input size $224 \times 224$ for our proposed method as well as other baselines. 
For our method, the network before average pooling is used as the feature extractor. We insert a variational encoding network between the feature extractor and the final fully connected layer {which acts as the classifier $T_\phi$}.
The variational encoding network $F_{\omega}$ is implemented using two separated networks with the same architecture, including a fully connected layer with the output dimension as $512$, a Relu activation layer and a fully connected layer with the output dimension as $80$. We then conduct reparameterization trick to obtain the output of variational encoding network.  The classification network $T_\phi$ is a fully connected layer with the output size as $7$. 

For the hyperparameters, we choose $\lambda_1=0.001$ and $\lambda_2=0.4$ for all settings. Due to the class imbalance within and across datasets, we adopt the focal loss \cite{lin2017focal} as the classification objective for our proposed method as well as other baseline techniques. For implementation, the alternate form proposed in \cite{lin2017focal} is adopted as it is an extension of cross-entropy loss and is also bounded and convex, which satisfies our assumption in the manuscript. During training, the Adam optimizer is used with learning rate as $0.0001$, weight decay as $0.001$ and the size of minibatch as $32$.  We train the models for $200$ epochs and the learning rate is decreased by a factor $10$ after every $80$ epochs.  For evaluation on testing set, we use the best performing model on the validation set. 
\subsection{Experimental Setting for Spinal Cord Gray Matter Segmentation Task}
We adopt 2D-UNet \cite{ronneberger2015u} (without the last $1 \times 1$ convolutional layer) as the backbone network by considering the MRI axial slice as input. The variational encoding network $F_{\omega}$ is implemented using two separated network with an identical architecture which includes a latent layer using $1\times 1$ convolutional layer with output channel as $64$, a Relu activation layer and a $1\times 1$ convolution layer to predict mean and standard deviation layers of the distribution with output channel as $8$ . We then conduct reparameterization trick to obtain the output of variational encoding network.  {We further adopt a 1$\times$1 convolution layer as the classification network $T_\phi$ with the output channel size as $2$ for segmentation purpose. } 


For the hyperparameters, we use $\lambda_1=0.001$ and $\lambda_2=0.01$. We adopt the weighted binary cross-entropy loss for classification, where the weight of a positive sample is set to the reciprocal of the positive sample ratio in the region of interest. We use Adam algorithm with learning rate as 1e-4, weight decay as 1e-8 and the batch size as 8 for each domain for training. We train the model for 200 epochs, where the learning rate is decreased every 80 epoch with a factor of 10. For data processing, the 3D MRI data is first sliced into 2D in axial slice view and then center cropped to $160\times 160$. We further conduct random cropping which leads to the size as $144\times 144$ for training.



\section{BigAug \cite{zhang2019unseen} for Segmentation}
We present here by considering the data-augmentation based domain generalizing method BigAug \cite{zhang2019unseen}, which stacked different types of transformations, including sharpness, blurriness, noise, brightness, contrast, rotation, scaling, etc., by considering 2D-UNet for spinal cord gray matter segmentation task \cite{prados2017spinal}.  The results are shown in Table 1 (a). As we can observed, the performances are not desired by directly adopting the default parameters for augmentation in \cite{zhang2019unseen}, which are even worse than the ``DeepAll" baseline in terms of DSC and JI. 

To understand the reason why BigAug \cite{zhang2019unseen} with default parameter setting leads to negative transfer, we visualize in Figure~\ref{fig:bigaug} some examples of the transformed input and groundtruth pairs by considering both the groundtruth of spinal cord and gray matter. As we can see, by conducting the augmentation with default parameters in \cite{zhang2019unseen}, the quality of input deteriorates and the boundary can be oversmooth, which may further lead to more discrepancy between source and target domain. 

We further consider to adopt the same augmentation in \cite{zhang2019unseen} by tuning the parameters in a wide range to report the best segmentation performance for this task. The results are shown in Table 1 (b). We observe that there exists some improvement compared with ``DeepAll" baseline, but our proposed method can still outperform \cite{zhang2019unseen} with parameter tuning, which is reasonable as it is difficult to choose a suitable augmentation type and magnitude for different medical imaging classification tasks.


\begin{table}[!t]
  \centering
  \caption{Domain generalization results on gray matter segmentation task using BigAug \cite{zhang2019unseen}. }
    \scalebox{0.7}{
    \subtable[Default Parameters \cite{zhang2019unseen}]{
    \begin{tabular}{cc|ccccc}
    \toprule
    source & target & DSC   & CC    & JI & TPR   & ASD \\
    \midrule
    2,3,4 & 1     & 0.7675 & 38.47 & 0.6250 & 0.7798 & 0.1286 \\
    1,3,4 & 2     & 0.7542 & 34.50 & 0.6061 & 0.9187 & 0.1013  \\
    1,2,4 & 3     & 0.5468 & -76.2 & 0.3809 & 0.6381 & 1.9013 \\
    1,2,3 & 4     & 0.8706 & 70.18 & 0.7712 & 0.9232 & 0.0437 \\
    \midrule
    \multicolumn{2}{c|}{Average} & 0.7348 & 16.74 & 0.5958 & 0.8150 & 0.5437  \\
    \bottomrule
    \end{tabular}%
    }
  }
  \quad
  \scalebox{0.7}{
    \subtable[Tuned Parameters]{
    \begin{tabular}{cc|ccccc}
    \toprule
    source & target & DSC   & CC    & JI & TPR   & ASD \\
    \midrule
    2,3,4 & 1     & 0.8438 & 62.02 & 0.7334 & 0.8600 & 0.1613 \\
    1,3,4 & 2     & 0.7703 & 40.17 & 0.6269 & 0.8866 & 0.1802 \\
    1,2,4 & 3     & 0.5556 & -73.7 & 0.3905 & 0.6282 & 1.5560 \\
    1,2,3 & 4     & 0.8891 & 74.94 & 0.8009 & 0.8827 & 0.0362  \\
    \midrule
    \multicolumn{2}{c|}{Average} & 0.7647 & 25.86 & 0.6379 & 0.8144 & 0.4834 \\
    \bottomrule
    \end{tabular}%
    }


}
\label{tab:gm_result}%
\end{table}%

\begin{figure}[h]
    \centering
    \includegraphics[width=13cm, trim=50 0 0 0]{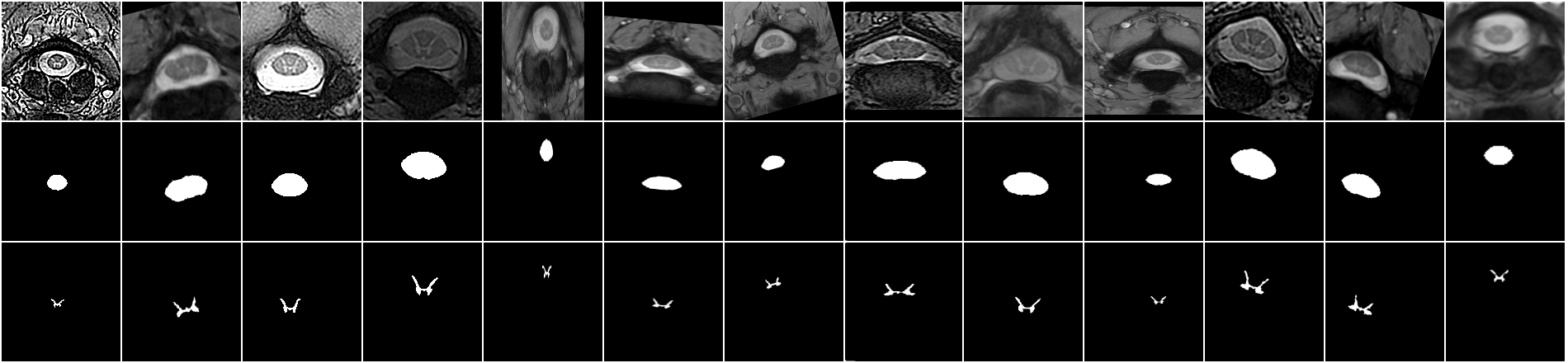}
    \caption{The samples of input and ground truth pairs generated from BigAug using the default hyper parameters. The first row shows the input, the second row shows the groundtruth of spinal cord, and the last row shows the groundtruth gray matters. } 
    \label{fig:bigaug}
\end{figure}

\end{document}